\documentclass{article} % For LaTeX2e
\usepackage{iclr2026_delta,times}

\usepackage{amsmath,amsfonts,bm}

\newcommand{\figleft}{{\em (Left)}}
\newcommand{\figcenter}{{\em (Center)}}
\newcommand{\figright}{{\em (Right)}}
\newcommand{\figtop}{{\em (Top)}}
\newcommand{\figbottom}{{\em (Bottom)}}

\def\Propref#1{Proposition~\ref{#1}}
\def\Figref#1{Figure~\ref{#1}}

\def\Appref#1{Appendix~\ref{#1}}
\def\Secref#1{Section~\ref{#1}}
\def\eqref#1{equation~\ref{#1}}
\def\Eqref#1{Equation~\ref{#1}}
\def\Tabref#1{Table~\ref{#1}}

\def\1{\bm{1}}

\def\vzero{{\bm{0}}}

\def\vk{{\bm{k}}}

\def\mI{{\bm{I}}}

\DeclareMathAlphabet{\mathsfit}{\encodingdefault}{\sfdefault}{m}{sl}
\SetMathAlphabet{\mathsfit}{bold}{\encodingdefault}{\sfdefault}{bx}{n}

\newcommand{\E}{\mathbb{E}}

\newcommand{\R}{\mathbb{R}}

\usepackage{hyperref}
\usepackage{url}

\usepackage{pifont}
\usepackage{booktabs}
\usepackage{mathtools}
\usepackage{caption}
\usepackage{amsthm}

\newif\ifshowcomments
\showcommentstrue

\ifshowcomments
  \newcommand{\inlineComment}[3]{\textcolor{#2}{[#3 -\textit{#1}]}}
  \newcommand{\shox}[1]{\inlineComment{Shox}{magenta}{#1}}
\else
  \newcommand{\inlineComment}[3]{}
  \newcommand{\shox}[1]{}
\fi

\newtheorem{theorem}{Theorem}[section]

\newtheorem{proposition}[theorem]{Proposition}

\iclrfinalcopy

\title{Low-Pass Flow Matching}

\author{Francesco M. Ruscio \\
ELLIS Institute T\"ubingen \& \\
Max Planck Institute for Intelligent Systems \&\\
AITHYRA\\
\texttt{fruscio@aithyra.at} \\ \\
\And
T. Konstantin Rusch \\
ELLIS Institute T\"ubingen \&\\
Max Planck Institute for Intelligent Systems \& \\
T\"ubingen AI Center \& \\
Liquid AI \\
\texttt{tkrusch@tue.ellis.eu} 
\\ \\
}

\begin{document}

\maketitle

\begin{abstract}
Flow Matching typically relies on white noise sources, a choice often misaligned with the power spectra of natural data, which tend to decay with frequency. To address this, we introduce \textbf{Low-Pass Flow Matching}, a variant of Flow Matching based on an operator-modulated interpolant. This formulation induces a time-varying spectral bias that transitions from the source spectrum to a frequency-decaying bias as the path approaches the data. We validate our method on unconditional image generation tasks, including the scientific Galaxy10 dataset. Empirically, we show that our method is particularly effective when paired with adaptive ODE solvers,  where it improves or preserves sample quality while substantially reducing sampling cost compared to standard baselines.
\end{abstract}
\begin{figure}[h]
    \centering
    \includegraphics[width=0.58\linewidth]{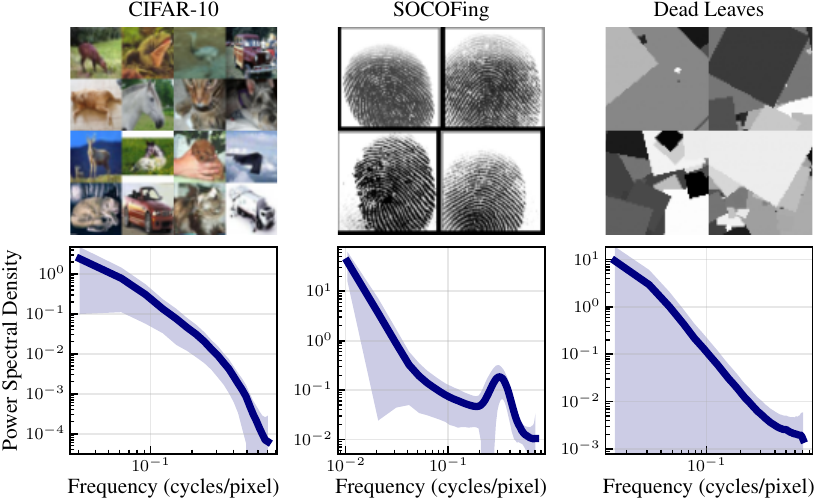}
    \caption{Radially averaged power spectral density of the training set across datasets. 
    \figleft CIFAR-10. 
    \figcenter Sokoto Coventry Fingerprint dataset \citep{shehu2018sokotocoventryfingerprintdataset}. 
    \figright Dead Leaves synthetic dataset \citep{baradad2021learning}.}
    \label{fig:1}
\end{figure}
\section{Introduction}
Dynamical systems have emerged as a powerful framework for generative modeling \citep{lai2025principlesdiffusionmodels, lipman2024flowmatchingguidecode}. Modern approaches rely on SDEs \citep{ho2020denoising, song2021denoising,  song2021scorebased}, ODEs \citep{albergo2023building, lipman2023flow, liu2023flow}, and flow maps \citep{boffi2025how}, achieving substantial progress in vision \citep{gupta2024photorealistic, rombach2022high}, audio \citep{audio}, and scientific domains \citep{bose2024sestochastic, zeni2025generative}. A wide range of inductive biases has been explored for diffusion models \citep{hoogeboom2022equivariant, falck2025fourier,  shariatian2025denoising, xu2022poisson, xu2023pfgm++, yoon2023scorebased}, but comparable biases have been less studied for flow-based models. One example is spectral bias \citep{falck2025fourier}.

Many natural datasets exhibit power spectra that approximately follow $1/f^\gamma$, with $0 < \gamma \le 2$ \citep{dutta1981low, ivanov1999multifractality, ruderman1997origins,  srivastava2003advances, torralba2003statistics, van1996modelling, voss19781}. More generally, \Figref{fig:1} shows that even when spectra do not follow a power law, they typically decay with increasing frequency. In contrast, diffusion models and flow matching (FM) commonly inject white noise \citep{dieleman2024spectral}, which can be misaligned with the spectral structure of the data \citep{falck2025fourier, randono2025clouddiffusion1theory}. To better match the data's spectral content, we introduce \textbf{Low-Pass Flow Matching} (LP-FM), a variant of FM based on an operator-modulated interpolant. LP-FM induces a time-dependent spectral bias; for small $t$ the path preserves the source spectrum, while as $t \to 1 $ it transitions to a frequency-decaying bias. Empirically, we show that the proposed bias preserves or improves sample quality, can accelerate training, and reduces sampling cost across datasets, including a scientific imaging benchmark.

\section{Related Work}
Recent works analyze generative dynamics through a spectral lens. \citet{liu2023genphys} frame generative processes as physical evolutions satisfying precise spectral properties, while \citet{hoogeboom2023blurring, rissanen2023generative} recast the noising process as heat-equation-based blurring. A growing line of research focuses on spectral data properties \citep{dieleman2024spectral, jiralerspong2025shaping} to optimize noise schedules \citep{benita2025spectral} or to favor specific Fourier components by introducing autocorrelated noise \citep{falck2025fourier, huang2024blue, randono2025clouddiffusion1theory}. Others have introduced generative models that operate directly in general functional spaces \citep{gerdes2024gud, phillips2022spectral, crabbe2024time}. Complementary to this, architectural interventions \citep{si2024freeu, yang2023diffusion, wang2024frequency, wang2025fourierflow} counteract the inherent spectral bias of the underlying network to better synthesize high- and low-frequency components during generation. 

We defer a detailed comparison with \citet{falck2025fourier} and \citet{randono2025clouddiffusion1theory} to \Appref{app:a1}.

\section{Background}
\subsection{Flow Matching}
Let $p$ and $q$ be probability distributions on $\R^d$, where $p$ is a simple source distribution and $q$ is the (unknown) data distribution. FM learns a time-varying vector field $u : [0,1] \times \R^d \to \R^d$ that defines the ODE
\begin{equation}
    \frac{d}{dt}\phi_t(x) = u_t\left(\phi_t(x)\right), 
    \qquad \phi_0(x)=x.
    \label{eq:flowode}
\end{equation}
Setting $p_0 = p$, the ODE induces the probability path $p_t \coloneqq [\phi_t]_{\#}(p_0)$. The goal is to learn $u_t$ such that the terminal distribution $p_1$ matches the data distribution $q$.

In general, $u_t(x)$ is intractable. Instead, \citet{lipman2023flow, tong2024improving} show it is possible to learn $u_t(x)$ by introducing a conditioning variable $z$ and minimizing the conditional flow matching (CFM) objective
\begin{equation*}
    \mathcal{L}_{\text{CFM}}(\theta) \coloneqq \mathbb{E}_{t, \rho(z),p_t(x|z)} \left\|v_\theta(t;x) - u_t(x|z)\right\|^2,
\end{equation*}
where $\rho$ is the distribution over $z$, and $p_t(x|z)$, $u_t(x|z)$ denote the conditional probability path and its associated vector field. Setting $z=x_1$, with $\rho(z)=q(x_1)$, recovers FM \citep{lipman2023flow}, while setting $z=(x_0, x_1)$, with $\rho(z) = p(x_0)q(x_1)$, yields CFM \citep{tong2024improving}.

\subsection{Power Spectra of Diffusion Models}
Consider a random field $x\in\R^{\mathcal{G}}$ on a grid $\mathcal{G}$ with $|\mathcal{G}|=d$, e.g., images with $d=H\times W$.
Let $\hat x(\vk)=\mathcal{F}\{x\}(\vk)$ denote its discrete Fourier transform (DFT).
The power spectral density (PSD) is
\begin{equation*}
    S_x(\vk) \coloneqq \E\!\left[|\hat x(\vk)|^2\right].
\end{equation*}
Assuming $x_0$ and $x_1$ independent, a standard interpolant for diffusion models and FM is
\begin{equation}
    x_t = \alpha_t\,x_1 + \sigma_t\,x_0,
    \qquad x_0 \sim \mathcal{N}(0,I_d),\;\; x_1\sim q.
    \label{eq:standard_interpolant}
\end{equation}
Consequently, the interpolant's PSD is \citep{dieleman2024spectral, falck2025fourier}
\begin{equation*}
    S_{x_t}(\vk)
    = \alpha_t^2 S_{x_1}(\vk) + \sigma_t^2,
\end{equation*}
i.e., the injected noise is additive and constant across frequencies. In the following sections we assume that $x_0$ and $x_1$ are spectrally uncorrelated,
 i.e. $\E[\hat x_0(\vk)\,\overline{\hat x_1(\vk)}]=0$ for all $\vk$, where $\overline{(\cdot)}$ denotes complex conjugation.

\section{Frequency Biased Interpolant for Flow Matching}\label{sec:fbi}
We generalize \Eqref{eq:standard_interpolant} by replacing the scalar noise scaling ($\sigma_t$) with a time-dependent linear shift-invariant (LSI) operator.
Let $\mathcal{L}_t$ be a bounded LSI on $\ell^2(\mathcal{G})$. We define the interpolant
\begin{equation}
    x_t \coloneqq t x_1 + \mathcal{L}_t x_0.
    \label{eq:int}
\end{equation}
We design $\mathcal{L}_t = (1-t)\mathcal{L}_t'$ such that, informally, $\mathcal{L}_t' \to \mathcal{I}$ as $t\to 0$ (where $\mathcal{I}$ is the identity operator), preserving the source bias early in the path, and $\mathcal{L}_t'\to \mathcal{L}$ as $t\to 1$ for a target bias $\mathcal{L}$.

\begin{proposition}\label{prop:psd}
Let $\mathcal{L}_t$ be a bounded LSI operator with frequency response $L_t$ and consider
the interpolant in \Eqref{eq:int}. The PSD is 
\begin{equation*}
    S_{x_t}(\vk)
    = t^2\,S_{x_1}(\vk) + |L_t(\vk)|^2\,S_{x_0}(\vk).
\end{equation*}
\end{proposition}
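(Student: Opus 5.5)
The plan is to move to the Fourier domain and use linearity of the DFT together with the convolution theorem for LSI operators. Because $\mathcal{L}_t$ is linear and shift-invariant on $\ell^2(\mathcal{G})$, it is a (circular) convolution with some kernel. The DFT therefore diagonalizes it: $\widehat{\mathcal{L}_t x}(\vk) = L_t(\vk)\,\hat x(\vk)$ for every $\vk$, where $L_t$ is the frequency response. Applying the DFT to \Eqref{eq:int} then gives, pointwise in $\vk$,
\begin{equation*}
    \hat x_t(\vk) = t\,\hat x_1(\vk) + L_t(\vk)\,\hat x_0(\vk).
\end{equation*}

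Next I take the squared modulus and expand it:
\begin{equation*}
    |\hat x_t(\vk)|^2 = t^2|\hat x_1(\vk)|^2 + |L_t(\vk)|^2|\hat x_0(\vk)|^2 + 2t\,\mathrm{Re}\!\left(\overline{L_t(\vk)}\,\hat x_1(\vk)\,\overline{\hat x_0(\vk)}\right).
\end{equation*}
Here $t$ is real and $L_t(\vk)$ is deterministic. Taking expectations, linearity lets me pull $t^2$, $|L_t(\vk)|^2$ and $\overline{L_t(\vk)}$ out of $\E$. The first two terms become $t^2 S_{x_1}(\vk)$ and $|L_t(\vk)|^2 S_{x_0}(\vk)$, by the definition of the PSD.

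The only step needing care is the cross term. Since the real part commutes with expectation, it reduces to $2t\,\mathrm{Re}\big(\overline{L_t(\vk)}\,\E[\hat x_1(\vk)\overline{\hat x_0(\vk)}]\big)$. The inner expectation is the complex conjugate of $\E[\hat x_0(\vk)\overline{\hat x_1(\vk)}]$, which is zero by the standing spectral-uncorrelatedness assumption. So the cross term vanishes and the claimed formula follows.

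I should also check that all expectations are finite, so that linearity is legitimate. This holds if $S_{x_0}$ and $S_{x_1}$ are finite, which is implicit in the PSD being defined. It also uses that $\mathcal{L}_t$ is bounded, so $|L_t(\vk)|<\infty$; on a finite grid this is automatic. As a sanity check, taking $\mathcal{L}_t=\sigma_t\mathcal{I}$, so that $L_t\equiv\sigma_t$, together with $S_{x_0}\equiv 1$ recovers the standard formula with $\alpha_t=t$.
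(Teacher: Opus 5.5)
Your proposal is correct and follows essentially the same route as the paper's proof. Both diagonalize $\mathcal{L}_t$ via the DFT, expand $|\hat x_t(\vk)|^2$, and kill the cross term using the spectral-uncorrelatedness assumption; your remarks on finiteness of the expectations and the scalar sanity check are harmless additions.
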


Proofs of this proposition and the subsequent one are given in \Appref{app:b}. Following \citet{lipman2023flow}, we recover the conditional vector field $u_t(x|x_1)$ from the interpolant in \Eqref{eq:int}.

\begin{proposition}\label{prop:cond_vf}
Let $\mathcal{L}_t$ be differentiable in $t$ and invertible for $t\in[0,1)$. Then, defining $\dot{\mathcal{L}}_t \coloneqq \partial_t \mathcal{L}_t$, the conditional vector field is 
\begin{equation*}
    u_t(x|x_1)= x_1 + \dot{\mathcal{L}}_t\,\mathcal{L}_t^{-1}\!\left(x-tx_1\right).
\end{equation*}
Moreover, if we assume $x_0 \sim \mathcal{N}(\cdot|\vzero, \mI)$, the conditional path is still a Gaussian path
\begin{equation*}
    x_t | x_1 \sim \mathcal{N}\left(\cdot\big| tx_1, \mathcal{L}_t\mathcal{L}_t^\ast\right)
\end{equation*}
where $\mathcal{L}_t^\ast$ is the adjoint of $\mathcal{L}_t$.
\end{proposition}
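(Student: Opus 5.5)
The plan is to follow the construction of \citet{lipman2023flow}: view the interpolant in \Eqref{eq:int} as a conditional flow map $\psi_t(x_0\mid x_1) \coloneqq t x_1 + \mathcal{L}_t x_0$ and obtain the conditional vector field by differentiating this map in $t$. The derivative is then re-expressed in terms of the current position $x$.

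\textbf{Vector field.} Differentiating in $t$ gives
\[
\frac{d}{dt}\psi_t(x_0\mid x_1) = x_1 + \dot{\mathcal{L}}_t x_0 .
\]
Since $\mathcal{L}_t$ is invertible for $t\in[0,1)$, I invert the flow map at a point $x = \psi_t(x_0\mid x_1)$, which gives $x_0 = \mathcal{L}_t^{-1}(x - t x_1)$. Substituting this yields $u_t(x\mid x_1) = x_1 + \dot{\mathcal{L}}_t\mathcal{L}_t^{-1}(x - t x_1)$. By construction $u_t(\cdot\mid x_1)$ generates the flow $\psi_t$, since $\frac{d}{dt}\psi_t = u_t(\psi_t\mid x_1)$. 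Hence it generates the conditional path $p_t(\cdot\mid x_1) = [\psi_t(\cdot\mid x_1)]_{\#}p$.

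Two points need care. The first is the meaning of $\dot{\mathcal{L}}_t$: on a finite grid, $\mathcal{L}_t$ is the matrix $F^{-1}\mathrm{diag}(L_t)F$, so $\dot{\mathcal{L}}_t = F^{-1}\mathrm{diag}(\dot L_t)F$ is again LSI, and differentiability in $t$ is entrywise. The second is well-posedness, which needs $\dot{\mathcal{L}}_t\mathcal{L}_t^{-1}$ to be bounded on compact subintervals of $[0,1)$. This follows from continuity and invertibility. The point $t=1$ is excluded, exactly as in standard FM where $\sigma_1=0$.

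\textbf{Gaussian path.} Conditionally on $x_1$, $x_t$ is an affine image of $x_0\sim\mathcal{N}(\vzero,\mI)$ under the linear operator $\mathcal{L}_t$ shifted by $t x_1$. Affine images of Gaussians are Gaussian, with mean $t x_1 + \mathcal{L}_t\E[x_0] = t x_1$ and covariance $\mathcal{L}_t\,\mathrm{Cov}(x_0)\,\mathcal{L}_t^\ast = \mathcal{L}_t\mathcal{L}_t^\ast$. This uses the independence of $x_0$ and $x_1$, so that conditioning on $x_1$ leaves the law of $x_0$ unchanged.

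\textbf{Main obstacle.} The only genuine subtlety is the complex-valued frequency response: it must not be mistaken for a real scaling. Because $\mathcal{L}_t$ maps real fields to real fields (its response is Hermitian-symmetric), the covariance is the real operator $\mathcal{L}_t\mathcal{L}_t^\ast$, with Fourier symbol $|L_t(\vk)|^2$. This is consistent with Proposition~\ref{prop:psd}. Invertibility of $\mathcal{L}_t$ makes this covariance positive definite, so the Gaussian is nondegenerate for $t<1$.

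Finally, the two LSI operators $\dot{\mathcal{L}}_t$ and $\mathcal{L}_t^{-1}$ commute, being diagonal in Fourier. So the formula can equivalently be written frequency-wise as $\hat u_t(\vk) = \hat x_1(\vk) + \frac{\dot L_t(\vk)}{L_t(\vk)}\bigl(\hat x(\vk) - t\hat x_1(\vk)\bigr)$, which serves as a sanity check.
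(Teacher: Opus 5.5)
Your proposal is correct and matches the paper's proof. You differentiate the conditional flow map $t x_1 + \mathcal{L}_t x_0$ and invert it via $x_0 = \mathcal{L}_t^{-1}(x - t x_1)$ to express the velocity at the current point. You then obtain the Gaussian path as an affine image of $x_0$, with mean $t x_1$ and covariance $\mathcal{L}_t\,\mathrm{Cov}(x_0)\,\mathcal{L}_t^\ast = \mathcal{L}_t\mathcal{L}_t^\ast$, exactly as the paper does.

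Your extra remarks (explicit use of the independence of $x_0$ and $x_1$, the LSI structure of $\dot{\mathcal{L}}_t$, the Fourier-domain form) are sound. One caveat: your well-posedness aside tacitly assumes $\dot{\mathcal{L}}_t$ is continuous, which differentiability alone does not give, though the statement does not need that aside.
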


Finally, following \citet{tong2024improving}, we can condition on $z= (x_0, x_1)$ and obtain the CFM form of the conditional vector field (which we always use in our experiments):
\begin{equation*}
    u_t(x|z) = x_1 + \dot{\mathcal{L}}_t x_0 = x_1 +\mathcal{F}^{-1}\left\{\left(\partial_t L_t(\vk)\right)\hat{x}_0(\vk)\right\}.
\end{equation*}

\textbf{Rotation-Invariant Parameterization.}
For image-like data we use isotropic filters, i.e.\ $L_t(\vk)=L_t(k)$, with radial frequency $ k \coloneqq \sqrt{\|\vk\|^2}$. We consider two low-pass parameterizations: rational low-pass (RLP),
\begin{equation*}
    |L_t(\vk)|^2 = \frac{(1-t)^2}{\left(1 + \frac{k^2}{k_0^2}\right)^{\gamma_1t}},
\end{equation*}
and Gaussian low-pass (GLP),
\begin{equation*}
    |L_t(\vk)|^2 = (1-t)^2 e^{-\gamma_1t\|\vk\|^2},
\end{equation*}
where $k_0>0$ sets the cutoff scale and $\gamma_1>0$ controls the strength of the spectral bias. When used within CFM, we refer to the resulting methods as RLP-CFM and GLP-CFM, respectively.

\section{Experiments}
We evaluate RLP-CFM and GLP-CFM against FM, CFM, and variance-preserving flow matching (VP-FM; \citet{lipman2023flow}) for unconditional image generation. We consider three datasets: Dead Leaves \citep{baradad2021learning}, CIFAR-10, and Galaxy10 DECaLS \citep{galaxy10_github}. Dead Leaves provides a controlled setting with an approximately $1/f^\gamma$ spectrum (cf.\ \Figref{fig:1} and the dashed reference in \Figref{fig:2}), CIFAR-10 is a standard natural-image benchmark, and Galaxy10 represents a real-world scientific imaging task. We report the Fr\'echet Inception Distance (FID) \citep{NIPS2017_8a1d6947} and the number of function evaluations (NFE) required for sampling. FID is computed with \textsc{Clean-Fid} \citep{parmar2021cleanfid} using \texttt{legacy\_tensorflow} settings and training-set Inception statistics. For sampling, we compare several ODE solvers: \textsc{Euler} and adaptive \textsc{Dopri5} \citep{dormand1980family} and \textsc{Tsit5} \citep{tsitouras2011runge}. Our implementation follows the training protocol of \citet{tong2024improving}. We use the U-Net backbone of \citet{dhariwal2021diffusion} with exponential moving average (EMA), dropout, and gradient-norm clipping, and optimize with either Adam \citep{Kingma2014AdamAM} or AdamW \citep{loshchilov2018decoupled}. Full hyperparameters are reported in \Appref{app:c}. Qualitative comparisons between real and generated samples are deferred to \Appref{app:d}.

\textbf{Dead Leaves.} We generate $50$k $1{\times}64{\times}64$ images by placing randomly oriented squares uniformly at random, following \citet{baradad2021learning} to obtain an approximately $1/f^\gamma$ spectrum. We compare CFM and RLP-CFM; results are in \Figref{fig:2}. RLP-CFM achieves substantially lower FID and reaches its best performance earlier: it achieves roughly half the FID of CFM and converges in $\sim$1350 epochs, while CFM continues to improve up to the end of training. The PSD evolution (right panel) suggests that RLP-CFM recovers higher-frequency content earlier along the path and matches the target spectral shape more closely around $t\approx 0.75$.
\begin{figure}[h]
    \centering
    \includegraphics[width=0.9\linewidth]{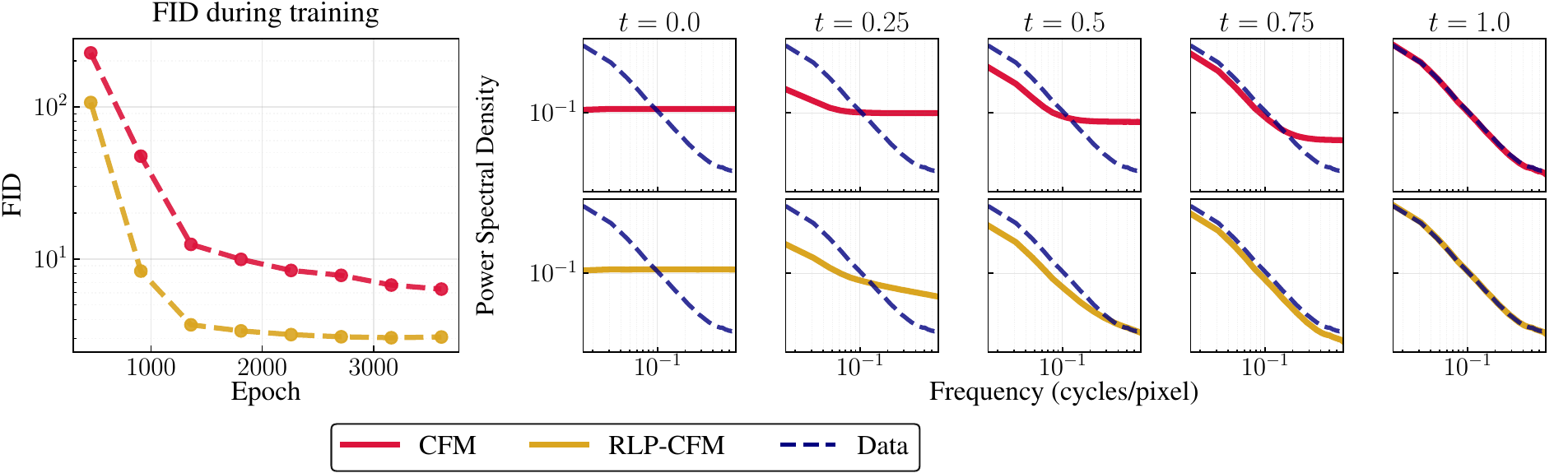}
    \caption{Dead Leaves results for CFM (crimson) and RLP-FM (goldenrod). \figleft FID (12.5k samples), evaluated every 450 training epochs. \figright Radially averaged power spectral density of generated samples at different generation times: top row CFM, bottom row RLP-FM.}
    \label{fig:2}
\end{figure}

\textbf{CIFAR-10 \& Galaxy10 DECaLS.}
 \Tabref{tab:results} summarizes the results (Galaxy10: $16$k RGB images at $256{\times}256$). With \textsc{Euler} solver, baselines are strongest, suggesting that a spectral bias may be less beneficial under a fixed-step discretization\footnote{Analogously, this may apply to DDIM \citep{song2021denoising}; we do not evaluate this here.}. Under adaptive solvers, however, RLP-CFM becomes competitive: on CIFAR-10 it preserves sample quality while reducing NFE, and on Galaxy10 it improves FID while substantially reducing NFE (up to $1.65\times$ vs.\ CFM and $2.16\times$ vs.\ VP-FM). In contrast, GLP-CFM underperforms RLP-CFM on CIFAR-10 and we were unable to obtain stable training on Galaxy10. Overall, these results indicate that a spectral bias can be most useful when paired with adaptive integration.
\begin{table}[h]
  \centering
  \renewcommand{\arraystretch}{1.12}
  \setlength{\tabcolsep}{6.5pt}
  \begin{tabular}{l c c c | c c c}
  \toprule
  Dataset $\rightarrow$
    & \multicolumn{3}{c}{CIFAR-10}
    & \multicolumn{3}{c}{Galaxy10 DECaLS} \\
  \cmidrule(lr){2-4}\cmidrule(lr){5-7}
  Solver $\rightarrow$
    & Euler (NFE=100)
    & \multicolumn{2}{c}{Adaptive}
    & Euler (NFE=100)
    & \multicolumn{2}{c}{Adaptive} \\
  \cmidrule(lr){3-4}\cmidrule(lr){6-7}
  Method $\downarrow$
    & FID
    & FID
    & NFE
    & FID
    & FID
    & NFE \\
  \midrule
  CFM/FM & 4.66 & \textbf{3.96} & 80.7 & \textbf{11.94} & 11.23 & 92.0 \\
  VP-FM  & \textbf{4.44} & 4.29 & 101.2 & 12.42   & 12.84   & 120.75 \\
  \midrule
  RLP-CFM   & 5.27 & 4.09 & 67.5 & 13.78 & \textbf{10.03} & \textbf{56.0} \\
  GLP-CFM   & 6.71 & 5.35 & \textbf{59.1} & --    & --    & -- \\
  \bottomrule
  \end{tabular}
\caption{FID and NFE with \textsc{Euler} and adaptive solvers (\textsc{Dopri5}/\textsc{Tsit5}; best over adaptive). Baselines: CFM, VP-FM. This work: RLP-CFM, GLP-CFM.}
\label{tab:results}
\end{table}
\section{Conclusion}
We introduced Low-Pass Flow Matching, a simple operator-modulated interpolant that induces a time-varying spectral bias in Flow Matching. Experiments suggest the bias is most useful with adaptive ODE solvers, improving or preserving sample quality while reducing sampling cost. Future work will evaluate RLP-CFM on additional scientific datasets. \newpage

%%%%%%%%----- BIBLIOGRAPHY ----------%%%%%%%%%%%

\bibliography{iclr2026_delta}
\bibliographystyle{iclr2026_delta}\newpage

%%%%%%%%----- APPENDIX ----------%%%%%%%%%%%

\appendix
\section{Extendend Related Work}
\label{app:a}

\textbf{Inductive biases in generative modeling.} Generative models often rely on inductive biases to tailor their sampling dynamics to specific data structures. Notable examples include physics-inspired biases, such as L\'evy processes for heavy-tailed data \citep{yoon2023scorebased, shariatian2025denoising} and Poissonian flows \citep{xu2022poisson, xu2023pfgm++}, that have been found to be a valid alternative to diffusion models. Another deeply explored area is geometric bias; \citet{kohler2020equivariant, garcia2021n} developed equivariant continuous normalizing flows, while \citet{hoogeboom2022equivariant, bose2024sestochastic} demonstrate the necessity of symmetry-preserving models for biological data. Beyond symmetry, a broad body of work covers alternative inductive biases in generative dynamics, including methods that design probability path or flows with low-curvature trajectories \citep{lipman2023flow, tong2024improving, liu2023flow}.

\subsection{Comparison with EqualSNR and Cloud Diffusion}
\label{app:a1}
We compare LP-FM with EqualSNR and Cloud Diffusion, two methods that modify diffusion models by using autocorrelated noise.

\textbf{Cloud Diffusion} \citep{randono2025clouddiffusion1theory} considers noise spectra of the form $S_{\text{noise}}(k)\propto 1/k$, which diverge as $k\to 0$ and may therefore lead to numerical instability near the DC component; since no experiments are reported, we limit our discussion to methodological considerations.

\textbf{EqualSNR} \citep{falck2025fourier} instead sets $S_{\text{noise}}(\vk)\propto S_1(\vk)$ to align the forward noising spectrum with the data spectrum, motivated by Gaussian assumptions in the DDPM reverse process \citep{ho2020denoising}. In contrast, FM does not require the same reverse-time Gaussianity; consequently, it is a priori unclear whether this rationale transfers. This motivates studying spectral bias in FM in its own right. Although EqualSNR is motivated by DDPM assumptions, it is evaluated with DDIM sampling \citep{song2021denoising}, which is closely related to FM with \textsc{Euler} solver \citep{gao2025diffusionmeetsflow}. Empirically, we do not observe benefits from introducing a spectral bias under \textsc{Euler} (except on the toy Dead Leaves), whereas gains emerge when FM is paired with adaptive ODE solvers. Therefore, the benefits we report arise under sampling conditions substantially different from DDIM. There are also methodological differences: EqualSNR operates in Fourier space, whereas we work in pixel space.

Finally, our approach induces a time-varying bias: $\mathcal{L}_t$ is designed to preserve the source spectral structure early along the path while progressively imposing a low-pass bias near the data distribution. In contrast, EqualSNR and Cloud Diffusion prescribe a noise spectrum intended to match the data, but do not explicitly preserve the spectral bias of the source distribution. Overall, our results highlight that (i) the solver choice can be critical for realizing the benefits of spectral bias in FM, and (ii) time-varying operator modulation provides a way to incorporate spectral structure while remaining compatible with nontrivial source distributions.

\section{Proofs}
\label{app:b}

This appendix collects proofs of the propositions in \Secref{sec:fbi}. For convenience, we restate each proposition before its proof. We also recall a few
operator-theoretic and Fourier-analytic facts used throughout.

\subsection{Preliminaries: LSI operators on a finite grid}

Let $\mathcal{G}$ be a finite periodic grid,
and equip $\mathbb{C}^{\mathcal G}$ with the standard $\ell^2$ inner product $\langle x,y\rangle \coloneqq \sum_{g\in\mathcal G} x(g)\,\overline{y(g)}$ and norm $\|x\|_2^2=\langle x,x\rangle$.
We denote by $\hat x(\vk)=\mathcal F\{x\}(\vk)$ the DFT of $x$.
We use the convention that the DFT is unitary so that Parseval's identity holds:
\begin{equation*}
    \|x\|_2^2 = \sum_{\vk} |\hat x(\vk)|^2.
\end{equation*}

\paragraph{Bounded linear operators.}
A linear map $\mathcal L:\ell^2(\mathcal G)\to \ell^2(\mathcal G)$ is bounded if
\[
\|\mathcal L\|_{\mathrm{op}} \coloneqq \sup_{\|x\|_2=1}\|\mathcal L x\|_2 < \infty.
\]
On a finite-dimensional space every linear operator is bounded.

\paragraph{Shift-invariant (LSI) operators and frequency response.}
An operator $\mathcal L$ is shift-invariant if it commutes with grid shifts.
Equivalently, $\mathcal L$ is a circular convolution with some kernel $h \in\R^{\mathcal G}$:
\begin{equation*}
    (\mathcal L x)(g) = (h * x)(g).
\end{equation*}
Such operators diagonalize in the Fourier domain: there exists a complex-valued function
$L(\vk)$ (the frequency response) such that for all $\vk$,
\begin{equation}
    \label{eq:operator_frequency}
    \mathcal{F}\left\{\mathcal L x\right\}(\vk) = L(\vk)\,\hat x(\vk)
\end{equation}
For a time-indexed family $\{\mathcal L_t\}_{t\in[0,1]}$ we write $L_t(\vk)$ for its response.

\paragraph{Invertibility.}
An LSI operator $\mathcal L$ is invertible on $\ell^2(\mathcal G)$ if and only if
$L(\vk)\neq 0$ for all $\vk$. In that case the inverse is also LSI and satisfies $\mathcal{F}\left\{\mathcal L^{-1} x\right\}(\vk) = \frac{1}{L(\vk)}\,\hat x(\vk)$.

\paragraph{Spectral (un)correlatedness.}
For random fields $x_0,x_1\in\R^{\mathcal G}$ we say they are spectrally uncorrelated if
\[
\E\!\left[\hat x_0(\vk)\,\overline{\hat x_1(\vk)}\right]=0
\qquad \text{for all }\vk.
\]

\subsection{Proof of \Propref{prop:psd}}

\begin{proposition}\label{prop:1}
Let $\mathcal L_t$ be a bounded LSI operator with frequency response $L_t(\vk)$ and consider
\[
x_t = t\,x_1 + \mathcal L_t x_0.
\]
Assume $x_0$ and $x_1$ are spectrally uncorrelated.
Then the PSD of $x_t$ satisfies
\[
S_{x_t}(\vk)=t^2\,S_{x_1}(\vk)+|L_t(\vk)|^2\,S_{x_0}(\vk).
\]
\end{proposition}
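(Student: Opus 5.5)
The plan is a direct computation in the Fourier domain. The only tools needed are linearity of the DFT, the diagonalization property of LSI operators in \eqref{eq:operator_frequency}, and the spectral uncorrelatedness assumption.

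\textbf{Transforming the interpolant.} First, apply $\mathcal F$ to $x_t = t\,x_1 + \mathcal L_t x_0$. By linearity of the DFT and \eqref{eq:operator_frequency} applied to $\mathcal L_t$, for every $\vk$ we get
\[
\hat x_t(\vk) = t\,\hat x_1(\vk) + L_t(\vk)\,\hat x_0(\vk).
\]
Boundedness is automatic on the finite grid, so there are no domain issues.

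\textbf{Expanding the squared modulus.} Next, expand $|\hat x_t(\vk)|^2 = \hat x_t(\vk)\,\overline{\hat x_t(\vk)}$. Because $t$ is real, this gives four terms:
\[
t^2|\hat x_1(\vk)|^2 \;+\; |L_t(\vk)|^2|\hat x_0(\vk)|^2 \;+\; t\,\overline{L_t(\vk)}\,\hat x_1(\vk)\overline{\hat x_0(\vk)} \;+\; t\,L_t(\vk)\,\hat x_0(\vk)\overline{\hat x_1(\vk)}.
\]
Take expectations term by term; all moments are finite since we work with PSDs. Here $L_t(\vk)$ is deterministic and comes out of the expectation. The first two terms give $t^2 S_{x_1}(\vk)$ and $|L_t(\vk)|^2 S_{x_0}(\vk)$.

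\textbf{Eliminating the cross terms.} The remaining cross terms involve $\E[\hat x_0(\vk)\overline{\hat x_1(\vk)}]$ and its complex conjugate $\E[\hat x_1(\vk)\overline{\hat x_0(\vk)}]$. Both vanish by spectral uncorrelatedness. This step is the only one needing care: the assumption is stated for one ordering, and the other ordering follows by conjugation. Summing the surviving terms yields the claimed identity.
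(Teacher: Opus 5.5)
Your proposal is correct and follows the paper's proof: transform the interpolant with the DFT, use the LSI diagonalization to get $\hat x_t(\vk)=t\,\hat x_1(\vk)+L_t(\vk)\,\hat x_0(\vk)$, expand the squared modulus, and kill the cross terms by spectral uncorrelatedness. The only difference is cosmetic: you keep the two cross terms separate and handle the conjugate ordering explicitly, whereas the paper combines them into a single $2\,\Re(\cdot)$ term.
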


\begin{proof}[Proof of \Propref{prop:1}]
Taking the DFT of the interpolant and using linearity yields, for each $\vk$,
\begin{equation*}
    \hat x_t(\vk)= t\,\hat x_1(\vk) + \mathcal{F}\left\{\mathcal L_t x_0\right\}(\vk).
\end{equation*}
Since $\mathcal L_t$ is LSI with response $L_t(\vk)$, \Eqref{eq:operator_frequency} yields
$\mathcal{F}\{\mathcal L_t x_0\}(\vk)=L_t(\vk)\,\hat x_0(\vk)$, hence
\[
\hat x_t(\vk)= t\,\hat x_1(\vk) + L_t(\vk)\,\hat x_0(\vk).
\]
Therefore,
\begin{align*}
S_{x_t}(\vk)
&= \E\!\left[|\hat x_t(\vk)|^2\right]
= \E\!\left[\left|t\,\hat x_1(\vk)+L_t(\vk)\,\hat x_0(\vk)\right|^2\right] \\
&= t^2\,\E\!\left[|\hat x_1(\vk)|^2\right]
  + |L_t(\vk)|^2\,\E\!\left[|\hat x_0(\vk)|^2\right]
  + 2\,\Re\!\left(t\,L_t(\vk)\,\E\!\left[\hat x_0(\vk)\,\overline{\hat x_1(\vk)}\right]\right).
\end{align*}
The cross term vanishes by spectral uncorrelatedness.
Recognizing $\E[|\hat x_i(\vk)|^2]=S_{x_i}(\vk)$ for $i\in\{0,1\}$ yields
\begin{equation*}
    S_{x_t}(\vk)=t^2\,S_{x_1}(\vk)+|L_t(\vk)|^2\,S_{x_0}(\vk).
\end{equation*}
\end{proof}

\subsection{Proof of \Propref{prop:cond_vf}}

\begin{proposition}\label{prop:2}
Let $\{\mathcal L_t\}_{t\in[0,1)}$ be a family of bounded, invertible LSI operators, and assume that
$t\mapsto \mathcal L_t$ is differentiable with respect to the operator norm. Define $\dot{\mathcal L}_t\coloneqq \partial_t \mathcal L_t$ and consider the interpolant
\begin{equation*}
    x_t = t\,x_1 + \mathcal L_t x_0.
\end{equation*}
Then the conditional vector field $u_t(x|x_1)$ associated with the conditional path
$p_t(\cdot|x_1)$ is
\begin{equation}\tag{a}
    u_t(x|x_1)= x_1 + \dot{\mathcal L}_t\,\mathcal L_t^{-1}\!\left(x-tx_1\right).
    \label{eq:a}
\end{equation}
Moreover, if $x_0\sim\mathcal{N}(\cdot|\vzero, \mI)$, then
\begin{equation}\tag{b}
    x_t | x_1 \sim \mathcal{N}\left(\cdot\big| tx_1, \mathcal{L}_t\mathcal{L}_t^\ast\right)
    \label{eq:b}
\end{equation}
where $\mathcal{L}_t^\ast$ is the adjoint of $\mathcal{L}_t$.
\end{proposition}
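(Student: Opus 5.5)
The plan follows the recipe of \citet{lipman2023flow}: represent the conditional path as the pushforward of the source under an explicit time-dependent affine map, differentiate that map in time, and invert it to express the velocity in terms of the current position. Fix $x_1$ and define the conditional flow
\[
\psi_t(x_0) \coloneqq t\,x_1 + \mathcal L_t x_0 .
\]
By construction, $x_t = \psi_t(x_0)$ with $x_0 \sim p$, so $p_t(\cdot|x_1) = [\psi_t]_{\#}p$. The conditional vector field is therefore the one generating $\psi_t$, namely $u_t(\psi_t(x_0)|x_1) = \frac{d}{dt}\psi_t(x_0)$.

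For part \eqref{eq:a}, first compute the time derivative. Since $t\mapsto\mathcal L_t$ is differentiable in operator norm, $t\mapsto \mathcal L_t x_0$ is differentiable for every fixed $x_0$, with derivative $\dot{\mathcal L}_t x_0$. This follows from $\|(\mathcal L_{t+h}-\mathcal L_t)x_0/h - \dot{\mathcal L}_t x_0\| \le \|(\mathcal L_{t+h}-\mathcal L_t)/h-\dot{\mathcal L}_t\|_{\mathrm{op}}\|x_0\|$. Hence
\[
\tfrac{d}{dt}\psi_t(x_0) = x_1 + \dot{\mathcal L}_t x_0 .
\]
Next, use invertibility of $\mathcal L_t$ for $t\in[0,1)$: $\psi_t$ is then an affine bijection with inverse $x_0 = \mathcal L_t^{-1}(x - t x_1)$. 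Substituting gives
\[
u_t(x|x_1) = x_1 + \dot{\mathcal L}_t\mathcal L_t^{-1}(x-tx_1),
\]
which is \eqref{eq:a}. One can also check directly that this field generates $\psi_t$, which justifies calling it \emph{the} conditional field. Indeed, $\psi_t$ is $C^1$ in $t$, and the field above is Lipschitz in $x$ for each $t$, since it is affine with bounded linear part. Uniqueness of ODE solutions then implies that $\psi_t$ is the flow of this field. Alternatively, one can verify the continuity equation for $p_t(\cdot|x_1)$ by the standard pushforward argument.

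For part \eqref{eq:b}, the claim is simply that an affine image of a Gaussian is Gaussian. If $x_0\sim\mathcal N(\vzero,\mI)$, then $x_t = t x_1 + \mathcal L_t x_0$ has mean $t x_1 + \mathcal L_t\E[x_0] = t x_1$. Its covariance is
\[
\E[\mathcal L_t x_0 (\mathcal L_t x_0)^\ast] = \mathcal L_t \E[x_0x_0^\top]\mathcal L_t^\ast = \mathcal L_t\mathcal L_t^\ast .
\]
Gaussianity follows, for instance, from the characteristic function $\E[e^{i\langle \xi, x_t\rangle}] = e^{i\langle\xi, t x_1\rangle}e^{-\frac12\|\mathcal L_t^\ast\xi\|^2}$. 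Since $\mathcal L_t$ is a convolution with the real kernel $h$, $\mathcal L_t$ maps real fields to real fields and its adjoint is convolution with the reflected kernel. In Fourier terms, the covariance is diagonal with entries $|L_t(\vk)|^2$, which is consistent with \Propref{prop:1}.

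The main obstacle is conceptual rather than computational. One must justify that the field obtained by differentiating $\psi_t$ is the conditional vector field of the path $p_t(\cdot|x_1)$, which needs invertibility of $\mathcal L_t$, and must handle the endpoint $t=1$, where $\mathcal L_t$ may vanish (e.g.\ the $(1-t)$ factor). I would restrict all statements to $t\in[0,1)$, exactly as the hypothesis does, and note that the CFM form $x_1+\dot{\mathcal L}_t x_0$ avoids the inverse altogether.
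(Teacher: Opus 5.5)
Your proposal is correct and follows essentially the same route as the paper. For (a), you write the conditional path as the pushforward of $x_0$ under $t x_1 + \mathcal L_t x_0$, differentiate in $t$, and substitute $x_0 = \mathcal L_t^{-1}(x - t x_1)$; for (b), you compute the mean $t x_1$ and covariance $\mathcal L_t\mathcal L_t^\ast$ of an affine image of a Gaussian.

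Your additional justifications go beyond the paper and are fine, with one exception. The uniqueness aside needs more than Lipschitz continuity in $x$ for each fixed $t$, because operator-norm differentiability does not make $\|\dot{\mathcal L}_t\mathcal L_t^{-1}\|$ locally integrable. It is easily repaired: $\mathcal L_t^{-1}(y(t) - t x_1)$ has zero derivative along any solution $y$. The statement does not depend on that aside in any case.
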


\begin{proof}[Proof of \Propref{prop:2}]
We prove \Eqref{eq:a} and \Eqref{eq:b}:
\begin{enumerate}
    \item[(a)] Our derivation proceeds along the the lines of \citet{lipman2023flow}, who derive the conditional vector field for a Gaussian path with scalar noise scaling.
    
    Fix $x_1$. Define the family of conditional flow maps $\phi_t(\cdot| x_1):\R^{\mathcal G}\to\R^{\mathcal G}$ by
    \begin{equation*}
        \phi_t(x_0| x_1)\coloneqq t\,x_1+\mathcal L_t x_0.
    \end{equation*}
    For each fixed $x_1$, the conditional path $p_t(\cdot|x_1)$ is the pushforward of the source law of $x_0$ through $\phi_t(\cdot|x_1)$, i.e. $x_t|x_1=\phi_t(x_0|x_1)$. 
    
    Let $y=\phi_t(x_0| x_1)$. Since $\mathcal L_t$ is invertible for all $t\in[0,1)$, the map $\phi_t(\cdot|x_1)$
    is invertible and
    \begin{equation*}
        x_0=\phi_t^{-1}(y|x_1)=\mathcal L_t^{-1}(y-tx_1).
    \end{equation*}
    Differentiating $\phi_t(x_0|x_1)$ with respect to $t$ yields
    \begin{equation*}
        \partial_t \phi_t(x_0|x_1)=x_1+\dot{\mathcal L}_t x_0.
    \end{equation*}
    By definition of the velocity field associated with the conditional flow,
    \begin{equation*}
        u_t(y|x_1)=\partial_t \phi_t(x_0| x_1)\Big|_{x_0=\phi_t^{-1}(y|x_1)}
    = x_1+\dot{\mathcal L}_t\,\mathcal L_t^{-1}(y-tx_1).
    \end{equation*}
    Renaming $y$ as $x$ gives \Eqref{eq:a}.
    \item[(b)] Assume $x_0\sim\mathcal N(\cdot\mid \vzero, \mI)$ and fix $x_1$. Since
    \[
    x_t = t\,x_1+\mathcal L_t x_0
    \]
    is an affine transformation of the Gaussian random vector $x_0$, it is Gaussian. Its conditional mean is
    \begin{equation*}
        \E[x_t|x_1]=t\,x_1+\mathcal L_t\,\E[x_0]=t\,x_1,
    \end{equation*}
    and its conditional covariance is
    \begin{align*}
        \mathrm{Cov}(x_t|x_1) &= \E\!\left[\bigl(x_t-\E[x_t| x_1]\bigr)\bigl(x_t-\E[x_t|x_1]\bigr)^\ast \,\big|\, x_1\right] \\
        &= \E[x_tx_t^\ast|x_1] - \E[x_t |x_1]\,\E[x_t |x_1]^\ast \\
        &= \E\!\left[\bigl(\mathcal L_t x_0\bigr)\bigl(\mathcal L_t x_0\bigr)^\ast \,\big|\, x_1\right]\\
        &= \mathcal L_t\,\E[x_0 x_0^\ast]\,\mathcal L_t^\ast \\
        &= \mathcal L_t\,\mathrm{Cov}(x_0)\,\mathcal L_t^\ast \qquad\qquad\quad \\
        &= \mathcal L_t\,I_d\,\mathcal L_t^\ast \\
        &= \mathcal L_t\mathcal L_t^\ast.
    \end{align*}
    Therefore,
    \begin{equation*}
        x_t\mid x_1 \sim \mathcal N\!\left(\cdot\mid t x_1,\ \mathcal L_t\mathcal L_t^\ast\right),
    \end{equation*}
    which is \Eqref{eq:b}
\end{enumerate}
\end{proof}

\newpage
\section{Experimental Settings}
\label{app:c}

\begin{table}[h]
\centering
\renewcommand{\arraystretch}{1.25}
\setlength{\tabcolsep}{4pt}
\begin{tabular}{l c c c c c}
\toprule
Dataset $\downarrow$
& Channels
& Depth
& Channel multipliers
& Attention resolutions
& Head channels\\
\midrule
Dead Leaves
& 128
& 2
& [1,2,2,2]
& 16
& 64 \\
CIFAR-10
& 128
& 2
& [1,2,2,2]
& 16
& 64 \\
Galaxy10
& 128
& 2
& [1,1,2,2,2,4]
& 32,16,8
& 64 \\
\bottomrule
\end{tabular}
\caption{UNET settings for each dataset. For more information, please refer to \citet{dhariwal2021diffusion}.}
\end{table}

\begin{table}[h]
\centering
\renewcommand{\arraystretch}{1.25}
\setlength{\tabcolsep}{4pt}
\begin{tabular}{l c c c c | c c c}
\toprule
Dataset $\downarrow$
& Epochs
& EMA
& Dropout
& Clip grad norm
& Optimizer
& LR
& LR Warmup \\
\midrule
Dead Leaves
& 3750
& 0.9999
& 0.1
& 1.0
& AdamW
& $1 \times 10^{-4}$
& \ding{55} \\
CIFAR-10
& 1900
& 0.9999
& 0.1
& 1.0
& Adam
& $2 \times 10^{-4}$
& \ding{55} \\
Galaxy10
& 3750
& 0.9999
& 0.05
& 0.75
& Adam
& $1 \times 10^{-4}$
& \ding{51} \\
\bottomrule
\end{tabular}
\caption{Training hyperparameters used for each dataset (training epochs, EMA decay, dropout, gradient clipping norm, optimizer, learning rate, learning rate warmup).}
\end{table}

\section{Generated images}
\label{app:d}
\begin{figure}[h!]
    \centering
    \includegraphics[width=1\linewidth]{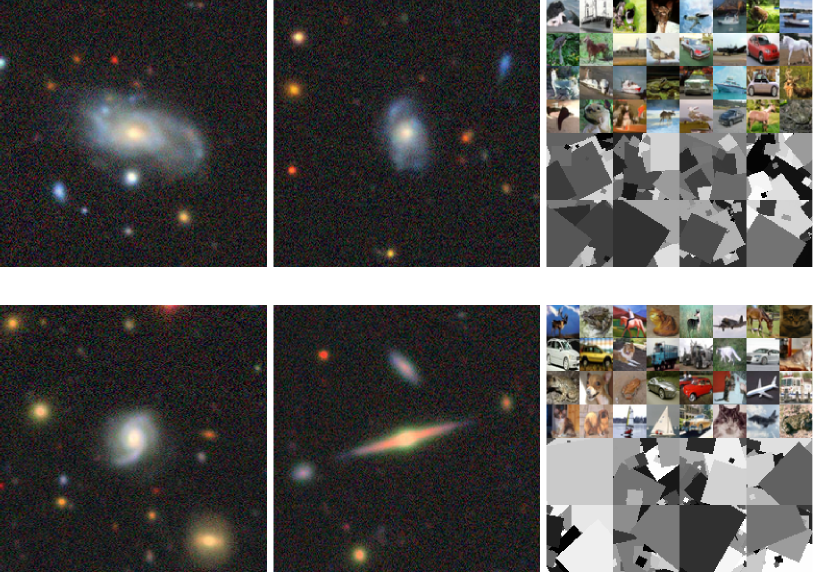}
    \caption{Galaxy10 DECaLS, CIFAR-10, and Dead Leaves images. \figtop Generated images with RLP-CFM. \figbottom Real images.}
    \label{fig:3}
\end{figure}

\end{document}